\newtheorem{theorem}{Theorem}[section]
\newtheorem{lemma}{Lemma}[section]
\crefname{algorithm}{Algorithm}{Algorithm}
\crefname{lemma}{Lemma}{Lemma}
\crefname{table}{Table}{Table}
\crefname{theorem}{Theorem}{Theorem}
\crefname{corollary}{Corollary}{Corollary}
\crefname{equation}{Eq.}{Eq.}
\crefname{figure}{Fig.}{Fig.}
\crefname{section}{Section}{Section}
\crefname{appendix}{Appendix}{Appendix}
\title{LoPT: \textbf{Lo}ssless \textbf{P}arallel \textbf{T}okenization Acceleration for Long Context Inference of Large Language Model}
\author{Wei Shao, Lingchao Zheng, Pengyu Wang, Peizhen Zheng, Jun Li, Yuwei Fan\thanks{Corresponding author: fanyuwei2@huawei.com}}
\affil{Huawei}
\date{}
\begin{document}
\maketitle

\begin{abstract}
Long context inference scenarios have become increasingly important for large language models, yet they introduce significant computational latency. While prior research has optimized long-sequence inference through operators, model architectures, and system frameworks, tokenization remains an overlooked bottleneck. Existing parallel tokenization methods accelerate processing through text segmentation and multi-process tokenization, but they suffer from inconsistent results due to boundary artifacts that occur after merging. To address this, we propose LoPT, a novel Lossless Parallel Tokenization framework that ensures output identical to standard sequential tokenization. Our approach employs character-position-based matching and dynamic chunk length adjustment to align and merge tokenized segments accurately. Extensive experiments across diverse long-text datasets demonstrate that LoPT achieves significant speedup while guaranteeing lossless tokenization. We also provide theoretical proof of consistency and comprehensive analytical studies to validate the robustness of our method.
\end{abstract}

\section{Introduction}

With the advancing capabilities of large language models (LLMs) ~\cite{kimiteam2025kimik2openagentic,yang2025qwen3technicalreport,openai2025gptoss120bgptoss20bmodel}, long-context inference scenarios-such as document analysis~\cite{xu2025comprehensivesurveydeepresearch}, agent applications~\cite{luo2025largelanguagemodelagent}-have become increasingly important. However, processing long contexts introduces significant computational latency. Previous research has addressed this challenge by optimizing operators~\cite{dao2022flashattentionfastmemoryefficientexact}, model architectures~\cite{gu2024mambalineartimesequencemodeling,yang2025qwen3technicalreport,deepseekai2024deepseekv32}, and inference frameworks~\cite{lee2025architectinglongcontextllmacceleration}. Despite these efforts, tokenization time has emerged as a major contributor to inference latency as context lengths grow. Unfortunately, accelerating long-context tokenization remains underexplored.

Current approaches to tokenization acceleration fall into two categories, each with limitations. The first, \textbf{Algorithmic Acceleration}, improves the engineering implementation of tokenization algorithms~\cite{githubGitHubHuggingfacetokenizers,githubGitHubOpenaitiktoken}. While effective for texts of all lengths, this approach lacks specialized optimization for extremely long contexts (e.g., 64K tokens or more). The second strategy, \textbf{Chunk-based Parallel Tokenization}, splits the long text into shorter text chunks for parallel processing and merges the results. Depending on the segmentation method, it can be \textbf{Delimiter-based} or \textbf{Overlap-based}~\cite{githubGitHubOpenLMLabParallelTokenizer}. The former splits the long text based on the specific delimiter (such as whitespace, comma, period), and the latter includes overlapping regions between adjacent chunks to improve merging accuracy.

Although chunk-based methods achieve notable speedups, they suffer from a critical drawback: the merged result may not match the output of tokenizing the original text without splitting due to the change of text boundary (as shown in \Cref{example}). Such inconsistency of tokenization results may lead to degraded model task performance, which is a significant reason why this kind of method has not been widely adopted for long-context tokenization. Delimiter-based methods exhibit low accuracy, while overlap-based methods, though more accurate, still fall short of 100\% correctness. Moreover, the substantial computational overhead introduced by the overlap-based method's merging algorithms significantly undermines acceleration effectiveness. Currently, there is a lack of chunk-based parallel tokenization methods that can guarantee both perfectly accurate merging results and substantial speedup simultaneously.

\begin{figure*}
    \centering
    \includegraphics[width=0.9\linewidth]{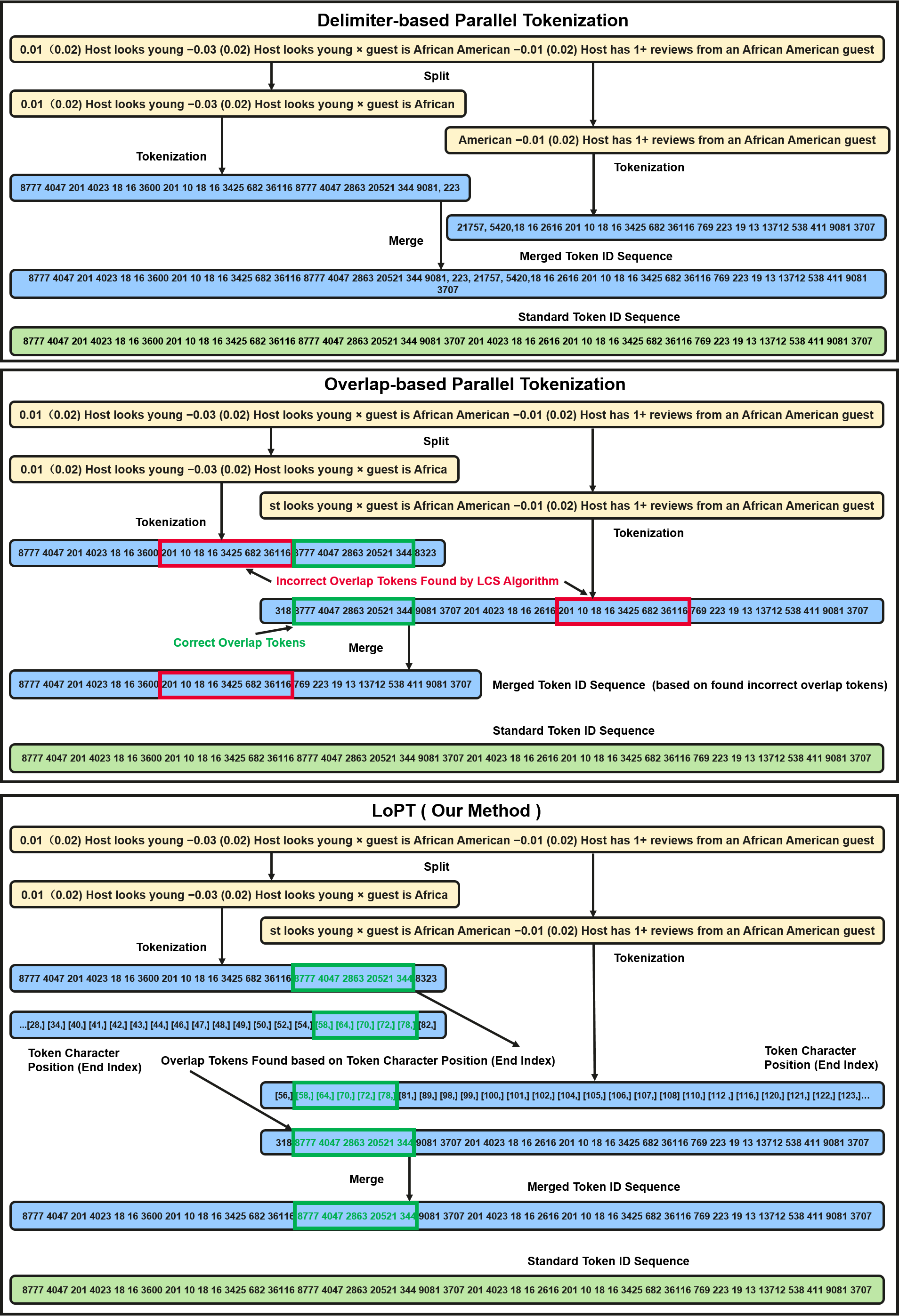}
    \caption{This figure shows the error cases of the previous chunk-based parallel tokenization methods (delimiter-based and overlap-based) and how our method LoPT obtains the correct result. Their outputs are in blue, and the standard tokenization results without splitting are in green. For delimiter-based parallel tokenization, incorrect results occur due to the neglect of token changes caused by variations in segment boundaries during the merge phase. For the overlap-based methods, although they attempted to address this issue by identifying overlap token sequences in the tokenization results of two adjacent text chunks, incorrect results still appear due to the mismatch of the overlap token sequence, which often occurs when processing certain texts, such as those containing consecutive repetitions. The LoPT identifies overlap tokens based on the token character position, which helps avoid such a mismatch.}
    \label{example}
\end{figure*}

To address these issues, we propose LoPT, a novel framework for \textbf{Lo}ssless \textbf{P}arallel \textbf{T}okenization and comparable tokenization speed with delimiter-based methods. Specifically, LoPT follows the Split $\rightarrow$ Parallel Tokenization $\rightarrow$  Merge pipeline from~\cite{githubGitHubOpenLMLabParallelTokenizer} but introduces key innovations. During splitting, the text is divided into overlapping chunks of equal length. Tokenization is performed in parallel, and results are merged based on character-level position information of tokens in the original text. This approach ensures accurate identification of tokens in overlapping regions without expensive token ID comparisons. Additionally, LoPT dynamically adjusts chunk lengths when the length of overlap sequences is insufficient, guaranteeing lossless results while maintaining speed comparable to delimiter-based methods. An example of how the LoPT processes a long text is also shown in \Cref{example}.

Experimental validation on three long text datasets (LongBenchV2~\cite{bai2025longbenchv2deeperunderstanding}, LEval (medium sub-datasets) ~\cite{an2023leval}, and ClongEval~\cite{qiu2024clongeval}) spanning various domains, formats, and languages demonstrates that LoPT significantly accelerates tokenization while ensuring perfect consistency with standard tokenization outputs. We also provide a theoretical proof of losslessness and conduct extensive analysis studies to offer further insights.

Based on our work, the contributions can be summarized as follows:

\begin{enumerate}
    \item We propose a novel lossless parallel tokenization framework (LoPT) that innovatively addresses the inconsistency issue in parallel tokenization through character position-based matching and dynamic chunk length adjustment.
    \item We conduct extensive experimental validation across multiple long-text datasets spanning diverse domains, formats, and languages, which demonstrates that our framework significantly accelerates tokenization while guaranteeing lossless results.
    \item We perform comprehensive theoretical analysis and a series of analytical experiments to prove the lossless nature of our method and investigate its performance under different settings, providing both theoretical foundation and practical insights.
\end{enumerate}

\section{Related Work}

\subsection{Tokenization Algorithm}

Tokenization is the process of converting text into input for large language models. In this paper, the term "tokenization" refers to the process of using a pre-constructed vocabulary to convert input text into token IDs for input to large language models. This process relies on a token vocabulary trained on large-scale data. To tokenize the texts more efficiently, several classic algorithms have been developed, including BPE~\cite{sennrich-etal-2016-neural}, WordPiece~\cite{Schuster2012JapaneseAK}, SentencePiece~\cite{kudo-richardson-2018-sentencepiece}, and BBPE~\cite{Wang_Cho_Gu_2020}. Among these, BPE and WordPiece are widely employed by many large language models as tokenization algorithms due to their efficiency and universality. We mainly discuss these two algorithms in our paper.

\subsection{Tokenization Acceleration}

Current tokenization primarily relies on well-established software libraries such as Hugging Face Tokenizers~\cite{githubGitHubHuggingfacetokenizers} and OpenAI TikToken~\cite{githubGitHubOpenaitiktoken}. Both libraries have implemented numerous engineering optimizations. For instance, Hugging Face Tokenizers offers a Rust-based fast version of the tokenizer, which significantly improves speed compared to the Python version. Similarly, TikToken has also implemented a Rust-based version, incorporating optimizations in storage and retrieval such as multi-threading, caching, and hashing.

The aforementioned tokenizers work on CPUs. There are also efforts aimed at accelerating the tokenization process from the GPU side. For example, cuDF~\cite{githubGitHubRapidsaicudf} has achieved acceleration for WordPiece tokenization on GPUs, and BlockBPE~\cite{you2025blockbpeparallelbpetokenization} has implemented a GPU-accelerated version of the BPE tokenization algorithm. Here, our focus is primarily on accelerating CPU-side tokenizers, as this remains the mainstream usage scenario and offers significant room for optimization.

Beyond engineering implementations that accelerate the tokenization process, some efforts have focused on improving tokenizer throughput through multi-process optimization. For example, the Hugging Face Datasets library supports multi-process processing for large-batch inputs, resulting in significant speedups. However, this type of optimization targets the overall processing of batch inputs and does not reduce the latency for individual samples. For inference scenarios, accelerating small-batch or even single-input processing is more critical, which is also the primary focus of our work.

\subsection{Tokenization Acceleration for Long-Context}

Although various acceleration techniques for tokenization have been mentioned earlier, some of these methods experience diminishing acceleration effects as sequence length increases. Other techniques are less suitable for long-context scenarios—for example, multi-process acceleration for batch data processing, as batch sizes of long-context inputs are typically small. These limitations result in excessively long tokenization times for sequences that are excessively long.

Currently, there are very few efforts specifically focused on accelerating tokenization for long contexts. Based on our research, only one software library, ParallelTokenizer~\cite{githubGitHubOpenLMLabParallelTokenizer}, attempts to address this issue from a multi-process perspective. Their approach involves splitting the long context into several shorter text chunks with overlapping segments between adjacent ones, processing these shorter texts using multiple processes, and finally merging the results. However, their merging strategy is relatively simple: adjacent segments are combined based on the longest overlapping token sequences. While this method achieves acceleration, it cannot guarantee that the merged result will be consistent with the output of direct tokenization on the original long context. Besides, due to the computation burden of finding an overlap token sequence, this method's acceleration performance is not significant. These are the problems our work aims to solve.

\begin{algorithm}[t]
	\caption{BPE Tokenization Algorithm}
	\label{BPE Tokenization}
	\KwIn{Input sentence string: $S$; Vocabulary: $V$; Merge Table: $M$.}
	\KwOut{Token ids list of input sentence $T$.}  
	\BlankLine
	Normalize and pre-tokenize $S$ to get a list of words $W$ ;

        Initilize $T=[]$
        
	\ForEach{i=0  to len($W$)-1}{
         Split $W[i]$ into a list of chars $C$;

        \While{True}{
         (best pair, best score) = (null, 0);
         
         \ForEach{j=0 to len(C)-1}{
            score = GET($M$, C[j], C[j+1]);

           \If{$\text{score}  > \text{best score}$}{
            (best pair, best score) = ((C[j], C[j+1]), score);
           }
         }
         \If{best pair is null}{
         break // can no longer be merged
         }
         \ForEach{j=0 to len(C)-1}{
           \If{$(C[j], C[j+1])  = \text{best pair}$}{
            MERGE(C[j], C[j+1]);
           }
         }
        }
        Replace tokens in $C$ with token id in $V$;
        
        Append $C$ to $T$;
        }
\end{algorithm}

\begin{algorithm}[t]
	\caption{WordPiece Tokenization Algorithm}
	\label{WordPiece Tokenization}
	\KwIn{Input sentence string: $S$; Vocabulary: $V$;}
	\KwOut{Token ids list of input sentence $T$.}  
	\BlankLine
	Normalize and pre-tokenize $S$ to get a list of words $W$ ;

        Initilize $T=[]$
        
	\ForEach{i=0  to len($W$)-1}{
    
        \While{$W[i]$}{
            $v_i$ = MatchLongestSubwordFromStart($W[i]$, $V$);
        
            \eIf{$v_i$}
            {
                Append $v_i$'s token id to the $T$;
                
                $W[i] = W[i] - v_i$;
            }{
                Append unknown token's id to the $T$;
            }
        }
    }
\end{algorithm}

\begin{figure}
    \centering
    \includegraphics[width=0.9\linewidth]{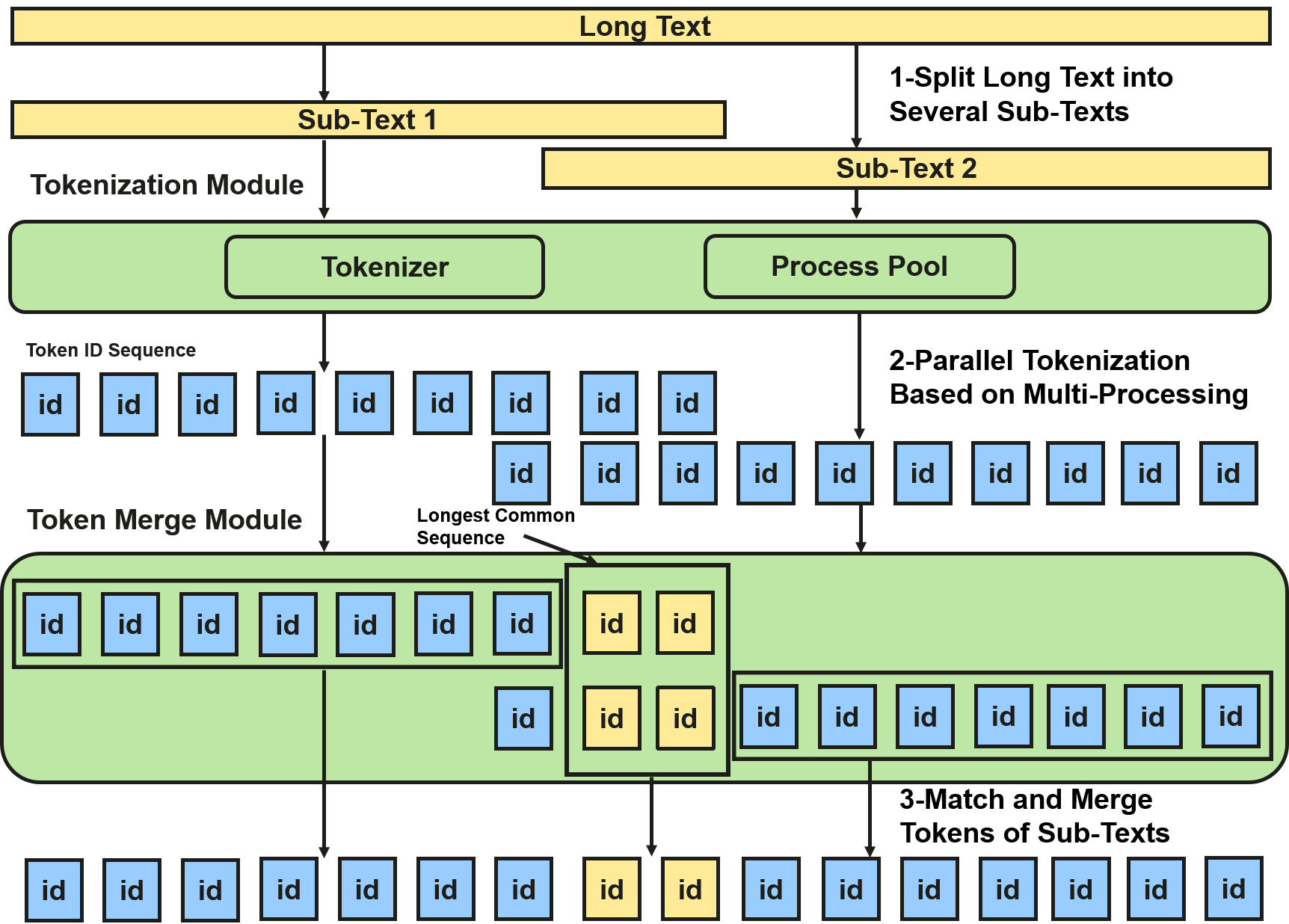}
    \caption{The ParallelTokenizer's process to parallelly tokenize a long text. Here, we use two text segments as an example.}
    \label{ParallelTokenizer}
\end{figure}

\section{Preliminary}

\subsection{WordPiece and BPE Tokenization}



We summarize the process of WordPiece and BPE tokenization algorithms in \Cref{BPE Tokenization} and \Cref{WordPiece Tokenization}. 

For BPE tokenization, assume that we have obtained
vocabulary $V$ and merge table $M$ from BPE training, and we need to convert the input sentence $S$ into a list of token IDs. The first step is to normalize and pre-tokenize the input sentence $S$. The detailed process depends on the specific tokenizer. After this step, we can obtain a list of words $W$. Then, for each word in $W$, we will split it into a basic character list $C$. For each adjacent pair in $C$, we will look up the merge table $M$ to get its score and then determine and merge the best adjacent pair. This process will be repeated until no pairs can be merged. After obtaining a word's BPE tokens, we will map these tokens to their vocabulary IDs and append them to the final results $T$.

For WordPiece tokenization, the input sentence string $S$ is split into a word list $W$, and each word is processed individually. For each word, the algorithm attempts to match the longest sub-word from the vocabulary $V$. Matched sub-words $v_i$ are added to the output, while unmatched parts are replaced with an unknown token. This process repeats for each word in the input.

\subsection{Chunk-based Parallel Acceleration for Long-Context Tokenization}

Chunk-based parallel tokenization acceleration methods can be divided into two categories. The first category involves splitting long texts based on a specific delimiter (such as periods or commas), then applying multi-process tokenization, and finally directly concatenating the tokenization results. However, this approach often alters the tokenization outcomes at the boundaries of the split text segments, leading to inconsistencies with the standard tokenization results. To alleviate this problem, the second method employs an overlapping segmentation method, where adjacent text segments are split with a certain degree of overlap. The tokenization results of these adjacent segments are then merged based on overlapping tokens. A representative method of this approach is the ParallelTokenizer~\cite{githubGitHubOpenLMLabParallelTokenizer}.

Here, we describe how the ParallelTokenizer speeds up the tokenization of long contexts. According to \Cref{ParallelTokenizer}, given the input long sequence text $S$, the ParallelTokenizer will split it into $N$ shorter texts. A tokenization process will process each text chunk, and these texts are tokenized nearly simultaneously. Then, the token IDs of each adjacent text pair are merged. In this figure, we use two text chunks as an example to explain the merge process. The sub-text 1's tokens have a certain length of overlap with the sub-text 2's tokens. The ParallelTokenizer finds the overlap tokens (in yellow) with the longest common sequence algorithm based on token IDs. Then, the left tokens in sub-text 1 (in blue) and the overlap tokens are merged with the right tokens in sub-text 2 (in blue) to form the final tokens.

As mentioned earlier, although this method can achieve acceleration, it cannot guarantee that the final result will be consistent with that of standard tokenization. This is because relying solely on token ID matching does not ensure that the matched tokens correspond to the same positions in the original long context, which may result in missing or extra tokens in the merged output.

Moreover, we have observed that even if the tokens obtained from matching adjacent segments correspond to the same tokens in the original long text, there is still no guarantee that the final merged result will align with the standard tokenization output. This discrepancy arises because segmentation alters the context around overlapping tokens, potentially leading to the application of different rules during tokenization. As a result, the token IDs may differ from those generated by tokenizing the original long text directly.

\section{Methodology}

\subsection{Overview}
To achieve the harmony between lossless results and acceleration, we propose a novel lossless parallel tokenization framework (LoPT) for long context tokenization acceleration. This framework takes long text as input and outputs its corresponding tokens. As shown in \Cref{architecture}, the framework consists of three modules: the text split module, the parallel tokenization module, and the position-aware token merge module. The functions of each module are as follows:

\textbf{Text Split Module}: Receives the long text input and splits it into several overlapped text chunks based on predefined chunk length and overlap length.

\textbf{Parallel Tokenization Module}: Utilizes multi-processing to tokenize each text chunk in parallel.

\textbf{Position-Aware Token Merge Module}: Merging two adjacent text chunks' tokenization results based on their overlap token sequence. For each token in the overlap token sequence, it is essential to ensure that its character position in the original long text, calculated based on its character positions in the two adjacent text chunks, remains consistent.

Besides, we also introduce a dynamic chunk length mechanism to ensure the existence of an overlap token sequence between two adjacent text chunks. In detail, for each merging of two adjacent text chunks, it dynamically adjusts the chunk length based on a minimum token match threshold.

These three modules and the mechanism work together to ensure faster tokenization and a lossless result. In the following, we will introduce them in detail.

\subsection{Lossless Parallel Tokenization}

\begin{figure}
    \centering
    \includegraphics[width=1.0\linewidth]{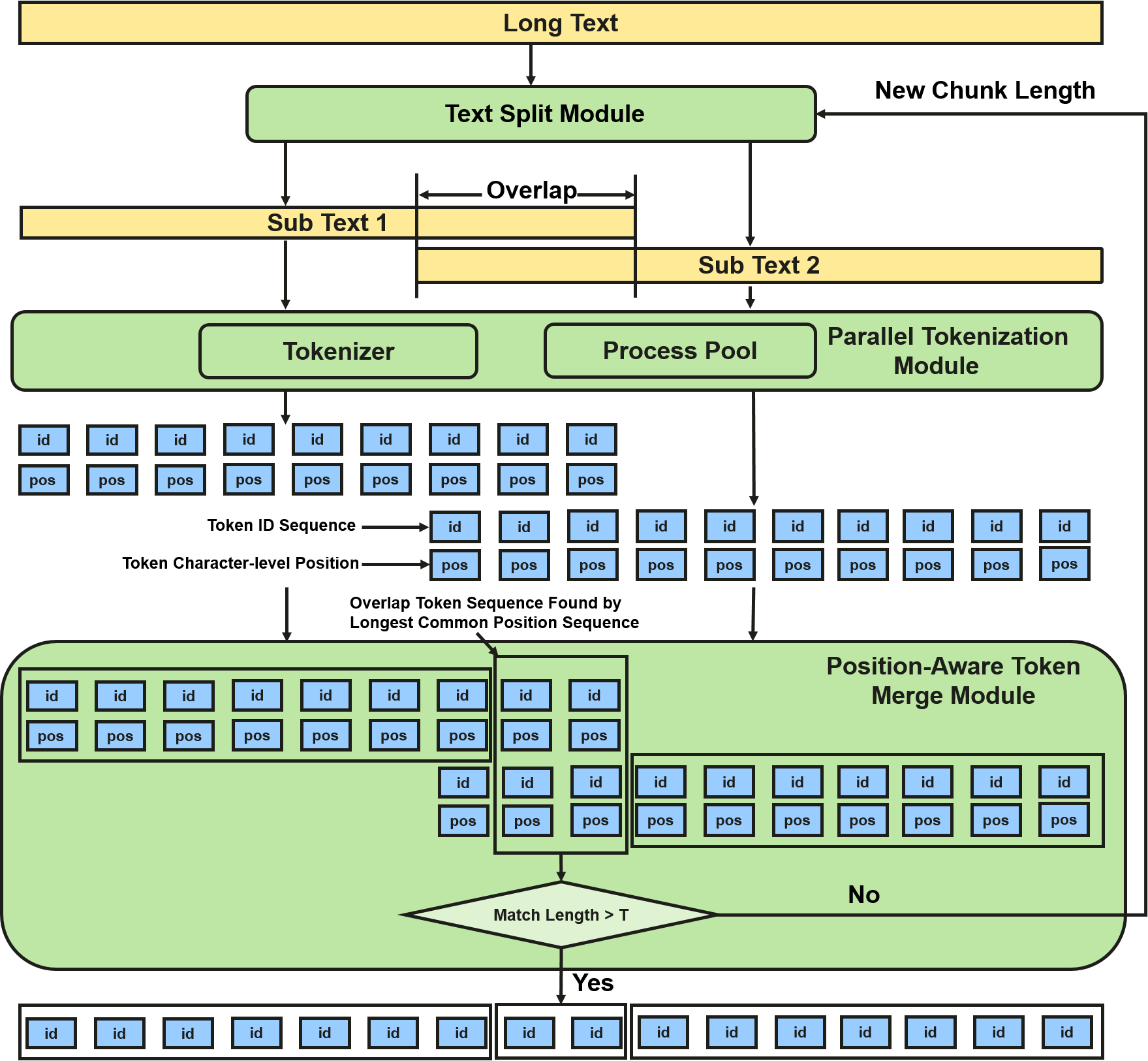}
    \caption{This figure illustrates the workflow of our method, with the green sections representing our method modules. The input long text  is first split by the text split module into text chunks of chunk length, with adjacent text chunks having an overlap of overlap length. Then, each text chunk is processed by the parallel tokenization module to generate the corresponding token sequence. Unlike previous methods, at this step, we also output the character-level position of each token within the corresponding text chunk, providing the necessary positional information for merging the tokenization results. Finally, the token merge module receives the tokens and token position information of each text chunk and performs the merging operation.}
    \label{architecture}
\end{figure}

\subsubsection{Text Split Module}

To ensure the losslessness of the parallel method, a certain overlap region is required between adjacent text chunks. Therefore, we adopt a process similar to the ParallelTokenizer, dividing the long text $S$ into sub-segments $\{s_1, s_2, ..., s_N\}$ based on a fixed chunk size $L_c$ and overlap length $L_o$. This process could be represented as:

\begin{equation}
    s_i = S[L_c*(i-1):L_c*i + L_o].
\end{equation}


\subsubsection{Parallel Tokenization Module}

This module serves as the primary component for acceleration. It speeds up the tokenization process by leveraging multi-process parallelization to invoke the tokenizer for processing. Moreover, due to the need for subsequent matching based on token character position information, in this step, we require the tokenizer to output the tokens along with their corresponding character positions within each text chunk. The whole process could be represented as:

\begin{equation}
     (T_i, P_i) = Process_i(Tokenizer(s_i)),
\end{equation}
where $s_i$ is the i-th text chunk, $T_i$ is the token id list of $s_i$ and $P_i$ is the character-level position of each token in $s_i$. 

\subsubsection{Position-Aware Token Merge Module}

This module ensures lossless final results and consists of two steps: match and merge. The match step aims to find the longest overlap token sequence between adjacent text chunk tokenization results based on the token's character positions. The merge step aims to merge these tokenization results based on the found overlap token sequences. 

Specifically, during the match step, we incorporate token-to-character position correspondence to ensure that matched overlapping tokens fall within the overlapping character range of both text chunks. The identified token sequences must align exactly at their starting and ending character boundaries to prevent missing or extra tokens caused by misalignment. Moreover, since position information is inherently ordered, we can leverage algorithms with lower computational complexity to identify overlapping token sequences, thereby reducing the time cost of this component compared to the previous method. During the merge step, given the overlapping region [a, b] between text chunks A and B, we retain the tokens from text chunk A before position a identified overlap token sequence, and the tokens from text chunk B after position b, then concatenate them. This process is repeated iteratively for the remaining text chunks. 

According to our experiment results, a sufficient length of the overlap token sequence is necessary for lossless tokenization results. However, if the preset chunk length is not suitable, we may not find such an overlap token sequence. To this end, we evaluate whether the current chunk length is appropriate based on the length of the obtained overlap token sequence. If it is smaller than a threshold, the text chunk length will be increased, and the whole process will restart based on the new chunk length.  

Formally, the match process can be represented as:

\begin{equation}
     l_{i}, r_{i}, n_{i}^o = Match(P_i, P_{i+1})
\end{equation}
, where $l_{i}$ and $r_{i}$ are indexes of the start token of overlap tokens in the $T_i$ and $T_{i+1}$. $n_{i}^o$ is the number of valid overlap tokens. \textbf{Note that the valid overlap tokens' char-level positions in $s_i$ and $s_{i+1}$ must have a fixed offset: chunk length. This means that the global char-level position (in the long text) of the token is computed based on the char-level position in $s_i$ is the same as that computed based on the char-level position in $s_{i+1}$}.

In the match process, if two adjacent text chunks have no sufficient overlap tokens with the same global char-level position in the long text ($n_{i}^o = 0$), this match will be regarded as a failed match, and the current chunk size will be doubled as the input parameter of a new long text split process. If the match is successful, we will merge token lists ${T_1, T_2, ..., T_N}$ as the final token list $T$ based on the following process:

\begin{equation}
    T = concatenate(T_1[:l_1 + n_{1}^{o}], T_2[r_1+n_{1}^o:l_2 + n_{2}^{o}],... , T_N[r_{N-1}+n_{N-1}^o:])
\end{equation}

Here, we dynamically adjust the chunk length based on the actual matching results to ensure the presence of overlap tokens at identical positions, thereby guaranteeing lossless final merge results.

\subsubsection{The LoPT Algorithm}

In this subsection, to more clearly demonstrate the process of our proposed method, we summarize the processing flow of the LoPT into~\Cref{lopt_alg}.

\begin{algorithm}[t]
\caption{The LoPT Algorithm}
\label{lopt_alg}
\KwIn{Long Text: $S$; A tokenization tool $Tokenizer$; The length of text chunk $L_c$, The length of overlap between two adjacent text chunks $L_o$, A process pool with size $M$}
\KwOut{Final token ID list of input sentence $T$.}  
\BlankLine
    Initialize $T=[]$;
    
    Initialize Overlap $O=[]$;

    Split $S$ into $N$ text chunks $s_i = S[L_c*(i-1):L_c*i + L_o], i=1, 2, .., N$;

    Assign process ($Process_{i}$) from the process pool for each $s_i$'s tokenization;
    
    $(T_i, P_i) = Process_i(Tokenizer(s_i))$; // {parallelly tokenize each text chunk $s_i$ to obtain token ID list and token positions}
    
    \ForEach{ i=1  to N-1 }{
        $ l_{i}, r_{i}, n_{i}^o = Match(P_i, P_{i+1}) $;  // {get start tokens' indexes of overlap in $T_i$ and $T_{i+1}$ and the number of overlapped tokens}

        \eIf{$ n_i^o > 0 $}
        {
            Append  ($l_{i}$, $r_{i}$, $n_{i}^o$) to $O$;
        }{  
            $L_c$ = $L_c$ * 2;
            
             Re-split $S$ with $L_c$ and $L_o$;
        }

    }

    $T = concatenate(T_1[:l_1 + n_{1}^{o}], T_2[r_1+n_{1}^o:l_2 + n_{2}^{o}], ... , T_N[r_{N-1}+n_{N-1}^o:])$;
    
\end{algorithm}

\subsection{Theory Analysis}
In the section background, we provide the algorithm of the WordPiece and BPE tokenization algorithms. Here, based on them, we try to analyze and demonstrate that when specific conditions are met, the overlap-based parallel tokenization method can ensure consistency between the merged results and the original tokenization results.

So what is this condition? We believe that, for the previous overlap-based parallel tokenizer method, the overlap tokens identified are not truly common. For tokens to be considered overlapping, they must satisfy the condition that their span start positions in the original long text are consistent, which is ensured by our framework. In \Cref{app:proof}, we attempt to prove that merging based on such overlap tokens can ensure the final result of overlap-based parallel tokenization is consistent with the standard results of the two types of tokenization algorithms, as stated in \Cref{thm:lossess}.

\begin{theorem} \label{thm:lossess}
Denote $s_i = S[L_c*(i-1):L_c*i+L_o]$, $(T_i, P_i) = Tokenizer(s_i)$, 
$(l_i, r_i, n_i^o) = Match(P_i, P_{i+1})$, and $T = concatenate(T_1[:l_1 + n_{1}^{o}], T_2[r_1+n_{1}^o:l_2 + n_{2}^{o}], ... , T_N[r_{N-1}+n_{N-1}^o:])$.

If $n_i^o > 0, 1\leq i\leq N-1$, then 
$$T = Tokenizer(S).$$
\end{theorem}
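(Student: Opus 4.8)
The plan is to reduce everything to the fact that both algorithms of \Cref{BPE Tokenization} and \Cref{WordPiece Tokenization} are \emph{word-local}: each normalizes and pre-tokenizes its input into a word list and then tokenizes every word independently, so $Tokenizer(S)$ is exactly the in-order concatenation of the per-word token sequences, and the ID of a token is obtained by a vocabulary lookup $V$ on its (normalized) character string. I would first record the standing hypotheses that make the statement hold---in practice these are what ordinary tokenizer configurations and the dynamic-chunk mechanism guarantee: (a) normalization and pre-tokenization are context-bounded, i.e.\ whether a position is a word boundary and how a character is normalized depends only on a window of width at most some fixed $\epsilon$ around it; (b) every word of $S$ has length at most $L_o-\epsilon$; and (c) $L_o\le L_c-\epsilon$. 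Under (a), the per-word tokenization of a word depends on that word alone. The proof then splits into a locality lemma (interior words are tokenized identically inside a chunk and inside $S$) and a merge lemma (the concatenation formula glues the correctly-tokenized pieces together without loss or duplication).

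For the locality lemma, call a word $w$ of $S$ with global span $[p,q)$ \emph{interior to} $s_i=S[L_c(i-1):L_c i+L_o]$ if $p-\epsilon\ge L_c(i-1)$ (vacuous when $i=1$) and $q+\epsilon\le L_c i+L_o$ (vacuous when $i=N$). For such $w$, the substring of $s_i$ together with its $\epsilon$-neighbourhood equals the corresponding part of $S$ verbatim, so pre-tokenization of $s_i$ recognizes the same word at the shifted position, and the deterministic per-word routine together with the $V$-lookup yields the same token IDs as in $Tokenizer(S)$; shifting positions by $L_c(i-1)$, the two also agree at the level of global character spans. Next I would establish two facts about the match step. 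First, soundness of positional matching: if a token of $T_i$ and a token of $T_{i+1}$ have equal global spans---which, by construction, forces the two local spans to differ by exactly $L_c$---then they have equal IDs, because the two spans cover the same characters of $S$, hence by (a) the same normalized string, hence the same $V$-entry. Second, boundary alignment: the common global span, say $[a,b)$, lies in the overlap region $[L_c i,L_c i+L_o)$; by (b) the word of $s_i$ containing it can be clipped only at the right edge of $s_i$, and the word of $s_{i+1}$ containing it only at the left edge of $s_{i+1}$, never both, and a short computation with (b)--(c) shows that on at least one of the two sides this containing word is interior to its chunk, so by the locality lemma $a$ and $b$ are token boundaries of $Tokenizer(S)$ and the matched token equals the $Tokenizer(S)$-token on $[a,b)$. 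Consequently the matched run of $n_i^o>0$ tokens---indices $[l_i,l_i+n_i^o)$ in $T_i$ and $[r_i,r_i+n_i^o)$ in $T_{i+1}$---is on both sides a block of consecutive $Tokenizer(S)$-tokens occupying one and the same global character range $[g_i^{\mathrm{s}},g_i^{\mathrm{e}})$.

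For the merge lemma I would induct on $i$, showing that the partial concatenation through the $i$-th chunk---namely $T_1[:l_1+n_1^o]$ followed by $T_j[r_{j-1}+n_{j-1}^o:l_j+n_j^o]$ for $j=2,\dots,i$---equals the prefix of $Tokenizer(S)$ ending at global character $g_i^{\mathrm{e}}$. Base case: every token of $T_1[:l_1+n_1^o]$ sits in a word interior to $s_1$ (its left edge is the start of $S$ and the matched run ends before its right edge), so it equals $Tokenizer(S)$ up to $g_1^{\mathrm{e}}$. Step: in $T_{i+1}$ the overlap run with $s_i$ occupies indices $[r_i,r_i+n_i^o)$ and characters $[g_i^{\mathrm{s}},g_i^{\mathrm{e}})$, while the overlap run with $s_{i+2}$ occupies indices $[l_{i+1},l_{i+1}+n_{i+1}^o)$ and characters $[g_{i+1}^{\mathrm{s}},g_{i+1}^{\mathrm{e}})$; by (c) these two overlap regions lie at opposite ends of $s_{i+1}$ and are disjoint, so $r_i+n_i^o\le l_{i+1}$, the segment $T_{i+1}[r_i+n_i^o:l_{i+1}+n_{i+1}^o]$ occupies exactly characters $[g_i^{\mathrm{e}},g_{i+1}^{\mathrm{e}})$, and each of its tokens sits in a word interior to $s_{i+1}$---any word straddling the left edge of $s_{i+1}$ lies entirely before $g_i^{\mathrm{e}}$, because by boundary alignment no matched token can fall inside it (its $s_{i+1}$-side tokenization is that of a clipped suffix and cannot reproduce an $S$-token span there)---so by the locality lemma this segment equals $Tokenizer(S)$ on $[g_i^{\mathrm{e}},g_{i+1}^{\mathrm{e}})$. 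The tail $T_N[r_{N-1}+n_{N-1}^o:]$ occupies $[g_{N-1}^{\mathrm{e}},|S|)$ with every token in a word interior to $s_N$ (its right edge is the end of $S$). Telescoping the character intervals $[0,g_1^{\mathrm{e}}),[g_1^{\mathrm{e}},g_2^{\mathrm{e}}),\dots,[g_{N-1}^{\mathrm{e}},|S|)$ gives a partition of $[0,|S|)$ cutting only at token boundaries of $Tokenizer(S)$, on each block of which the concatenated tokens agree with $Tokenizer(S)$; hence $T=Tokenizer(S)$.

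I expect boundary alignment to be the main obstacle, and the place where hypotheses (a)--(c) actually earn their keep: one must carefully analyze words clipped at a chunk's left or right edge, argue how a clipped prefix or suffix can or cannot share a token \emph{by span} with the full word, and treat the boundary chunks $s_1$ and $s_N$ separately; the argument also quietly uses that consecutive positionally-matched tokens in a chunk correspond to consecutive tokens of $Tokenizer(S)$, which again reduces to the context-boundedness of pre-tokenization. The remaining pieces---that equal spans imply equal IDs via the $V$-lookup, and the telescoping of the concatenation formula---are routine once the hypotheses are pinned down, and I would discharge the locality claims by direct inspection of \Cref{BPE Tokenization} and \Cref{WordPiece Tokenization}, whose inner loops read nothing beyond the current word.
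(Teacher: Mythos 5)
Your proposal takes a genuinely different route from the paper's. The paper reduces to two chunks, writes $S=L+M+R$ with $M$ the string under the matched tokens, assumes the matched run is longer than the longest vocabulary entry, and runs an infinite-descent argument on the first token (for WordPiece) or first merge (for BPE) generated when tokenizing $S$, using a substring-consistency lemma to derive $F(L+M)=F(L)+F(M)$ and $F(M+R)=F(M)+F(R)$ and then $F(S)=F(L)+F(M)+F(R)$. You instead exploit the word-local structure of \Cref{BPE Tokenization} and \Cref{WordPiece Tokenization} directly: an interior-word locality lemma plus a telescoping merge. Your version has the virtue of making the hidden hypotheses explicit (context-bounded pre-tokenization, words short relative to $L_o$, $L_o\le L_c$), whereas the paper's appendix quietly strengthens the theorem's hypothesis from $n_i^o>0$ to ``total matched span length exceeds the maximal vocabulary-entry length'' and leans on a substring-consistency lemma whose one-line justification is itself the hardest part of the claim. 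Note, though, that your sufficient conditions are genuinely different from the paper's (word length versus overlap length, rather than matched-span length versus maximal vocabulary entry), so you are proving a different strengthening of the stated theorem.

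There is, however, a real gap at exactly the point you flag: boundary alignment is asserted, not proven, and it is the whole content of losslessness. Two specific problems. First, the claim that the word containing a matched span $[a,b)$ is interior to at least one chunk does not follow from (b) as stated: a word of length between $L_o-2\epsilon$ and $L_o-\epsilon$ can simultaneously start before $L_c i+\epsilon$ and end after $L_c i+L_o-\epsilon$, so you need the bound tightened to $L_o-2\epsilon$ (or an explicit argument ruling this out). Second, and more seriously, the parenthetical justification that ``any word straddling the left edge of $s_{i+1}$ lies entirely before $g_i^{\mathrm{e}}$'' is precisely the statement that the matched run clears the clipped boundary word, and your argument for it is circular-sounding: you need to show that a clipped suffix of a word, tokenized as a fresh word, cannot accidentally reproduce an $S$-token span that the Match step would accept, and that even if it did, the tokens of $T_{i+1}$ \emph{after} index $r_i+n_i^o$ do not begin inside that clipped word. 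Relatedly, ``equal spans imply equal IDs via the $V$-lookup'' fails for WordPiece in general because of the continuation-prefix convention (\texttt{able} versus \texttt{\#\#able} occupy the same span but are different vocabulary entries); it is rescued only because the merge keeps the overlap tokens from the left chunk, where the containing word can be clipped only as a prefix so the continuation status is preserved --- but that is an argument you must actually make. Until the boundary-alignment lemma is written out with these cases discharged, the proof is a plausible plan rather than a proof.
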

\section{Experiment}

To validate the effectiveness of our proposed framework, we conduct a series of comparative experiments. Furthermore, to investigate the performance of our method across various scenarios, we also conduct a set of analytical experiments. In the following section, we will introduce these two major categories of experiments.

\subsection{Comparison Experiment}
In this section, we aim to answer the following questions through a series of experiments: How does the acceleration performance of our method compare with that of the original tokenizer in long-context scenarios? Can our method achieve lossless tokenization with comparable processing time relative to other parallel acceleration approaches? To answer these questions, we conduct comparison experiments using different tokenizers on three long-text datasets. Details of the comparison experiment settings are presented below.

\subsubsection{Experiment Settings}

\paragraph{Datasets}

 We use three long-text datasets: LongBenchV2~\cite{bai2025longbenchv2deeperunderstanding}, LEval (medium sub-datasets) ~\cite{an2023leval}, and ClongEval~\cite{qiu2024clongeval} as benchmark datasets. The LongBenchV2 and LEval consist of English long texts, and the ClongEval consists of Chinese long texts. The average text length of LongBenchV2 is significantly larger than that of the other two datasets.

\paragraph{Implementation of LoPT}

We employ the HuggingFace fast tokenizer as the foundational tokenizer in our framework. To achieve optimal speed, the token merge module is implemented in C++, while the remaining components are developed in Python. Our framework defaults to using 32 processes for acceleration, with the chunk length set to the input text length divided by the number of processes. To obtain more accurate performance measurements for long-text processing, we set the input batch size to 1. The threshold for the minimum length of the overlap token sequence is 2.

\paragraph{Baselines}
\begin{enumerate}
    \item HuggingFace Tokenizer Fast: It is the fast version of the tokenizer provided by HuggingFace and used to provide standard tokenization results and time baseline here. To ensure the robust and generalization of experiments, we use tokenizers from different model families, including Qwen3~\cite{yang2025qwen3technicalreport}, DeepSeek-V3~\cite{deepseekai2024deepseekv3technicalreport}, Llama-3.1~\cite{patterson2022carbonfootprintmachinelearning}, GPT-OSS-120B~\cite{openai2025gptoss120bgptoss20bmodel}, BERT-Base-Uncased, and BERT-Base-Cased~\cite{DBLP:journals/corr/abs-1810-04805}. Among them, the tokenizers of the first four models use the BPE tokenization algorithm, while the tokenizers of the latter two models use the WordPiece tokenization algorithm.

    \item Delimiter-based ParallelTokenizer: The long text is segmented according to a specific text delimiter, and then these text chunks are tokenized in parallel based on a multi-process approach. Finally, the tokenization results of adjacent text chunks are directly concatenated as the final result. Here, we employ three types of delimiters: whitespace, comma, and period. To obtain a fair comparison, we choose the right first delimiter around the chunk length.

    \item Overlap-based ParallelTokenizer: The long text is split into overlapping adjacent chunks, which are then tokenized in parallel using a multi-process approach. The results of adjacent segments are merged based on overlapping token IDs. Here, we use the ParallelTokenizer~\cite{githubGitHubOpenLMLabParallelTokenizer} as the representative method. The original version was implemented in Python, and the algorithm used to identify the overlap token sequence could be improved. To ensure a fairer comparison, we reimplement the framework and optimize its token merge algorithm for more accurate results. We will list the results of the two versions in the following comparison experiment. (ParallelTokenizer-Origin, ParallelTokenizer-Ours)

\end{enumerate}

\paragraph{Metrics}

We measured the end-to-end \textbf{latency} of the standard HuggingFace TokenizerFast, our framework, and other baseline methods, as well as the \textbf{accuracy} of our framework and other parallel tokenization methods. Specifically, end-to-end latency means the time a tokenizer converts a sentence into a dictionary containing tensors like input IDs, attention masks. The accuracy refers to the proportion of results that exactly match those obtained by directly using the standard HuggingFace TokenizerFast.

\paragraph{Device}

Unless otherwise specified, all experiments were conducted on a CPU environment with a clock speed of 3.8 GHz and 112 cores.

\subsubsection{Comparison Results}

The experimental results are shown in \Cref{comparison_result}. Compared to other methods, our approach is the only acceleration method that achieves an accuracy of 1 on all datasets with all tokenizers, which demonstrates the effectiveness and robustness of our framework. The Overlap-based ParallelTokenizer also achieves high accuracy on the three datasets, but its acceleration performance is significantly weaker than our method due to the higher complexity of its merge algorithm. While the remaining three delimiter-based methods achieve significant acceleration, their accuracy is relatively low and is influenced by the tokenizer's vocabulary and the language of the input context, which limits their generalization.

Moreover, we can observe that BERT-like tokenizers achieve significantly higher accuracy when using the delimiter-based method compared to other tokenizers. This is because during preprocessing, BERT-like tokenizers typically treat punctuation and spaces as separate tokens, making the segmentation and merging results consistent for most texts. In contrast, other tokenizers tend to generate sub-word tokens where punctuation and spaces may be embedded within the tokens, leading to corruption of the token sequence after segmentation.

\begin{table}[]
    \centering
    \resizebox{\textwidth}{!}{
    \begin{tabular}{cccccccc}
    \toprule
    \multirow{2}{*}{LLM} & \multirow{2}{*}{Tokenization Methods} & \multicolumn{2}{c}{LongBenchV2} & \multicolumn{2}{c}{LEval} &  \multicolumn{2}{c}{ClongEval} \\
    &  & Latency(ms) & Accuracy & Latency(ms) & Accuracy & Latency(ms) & Accuracy \\
    \midrule
    \multirow{6}{*}{Qwen3} & 
    HuggingFace TokenizerFast & 618.5 & - & 33.2 & - & 69.9 & - \\
     & Delimiter based ParallelTokenizer (,) & 110.9 & 0.247 & 5.7 & 0.315 & 21.1 & 0.576 \\
     & Delimiter based ParallelTokenizer (.) & 103.7 & 0.328 & 6.4 & 0.559 & 14.4 & 0.677 \\
     & Delimiter based ParallelTokenizer (-) & 93.0 & 0.708 & 5.2 & 0.806 & 15.3 & 0.711 \\
     & Overlap-based ParallelTokenizer-Origin & 644.7 & 0.879 & 32.3 & 0.803 & 68.2 & 0.946 \\
     & Overlap-based ParallelTokenizer-Ours & 332.4 & 0.982 & 8.5 & 0.998 & 18.6 & 0.999 \\
     & LoPT(Ours) & 116.8 & 1.0 & 7.6 & 1.0 & 16.8 & 1.0 \\
    \midrule
    \multirow{6}{*}{DeepSeek-V3} & 
    HuggingFace TokenizerFast & 622.4 & - & 32.4 & - & 74.0 & - \\
     & Delimiter based ParallelTokenizer (,) & 104.5 & 0.247 & 5.2 & 0.317 & 20.0 & 0.575  \\
     & Delimiter based ParallelTokenizer (.) & 98.6 & 0.326 & 6.3 & 0.564 & 14.1 & 0.605 \\
     & Delimiter based ParallelTokenizer (-) & 91.2 & 0.700 & 5.1 & 0.804 & 16.0 & 0.666 \\
     & Overlap-based ParallelTokenizer-Origin & 648.2 & 0.889 & 40.6 & 0.801 & 80.5 & 0.946 \\
     & Overlap-based ParallelTokenizer-Ours & 224.3 & 0.98 & 8.4 & 0.998 & 17.8 & 0.999 \\
     & LoPT(Ours) & 108.2 & 1.0 & 7.2 & 1.0 & 15.6 & 1.0 \\
    \midrule
    \multirow{6}{*}{Llama-3.1} & 
    HuggingFace TokenizerFast & 513.3 & - & 26.1 & - & 60.4 & - \\
     & Delimiter based ParallelTokenizer (,) & 91.2 & 0.247 & 6.6 & 0.315 & 18.2 & 0.573  \\
     & Delimiter based ParallelTokenizer (.) & 87.1 & 0.328 & 7.8 & 0.559 & 13.8 & 0.665 \\
     & Delimiter based ParallelTokenizer (-) & 81.1 & 0.708 & 7.4 & 0.806 & 15.8 & 0.711 \\
     & Overlap-based ParallelTokenizer-Origin & 488.9 & 0.887 & 26.1 & 0.803 & 60.6 & 0.944 \\
     & Overlap-based ParallelTokenizer-Ours & 218.7 & 0.980 & 7.9 & 0.998 & 19.4 & 0.998 \\
     & LoPT(Ours) & 103.8 & 1.0 & 7.0 & 1.0 & 17.0 & 1.0 \\
    \midrule
    \multirow{6}{*}{GPT-OSS-120B} & 
    HuggingFace TokenizerFast & 500.6 & - & 26.9 & - & 66.3 & - \\
     & Delimiter based ParallelTokenizer (,) & 98.3 & 0.245 & 5.6 & 0.315 & 19.9 & 0.572  \\
     & Delimiter based ParallelTokenizer (.) & 92.0 & 0.324 & 5.7 & 0.561 & 14.5 & 0.610 \\
     & Delimiter based ParallelTokenizer (-) & 87.4 & 0.708 & 6.1 & 0.806 & 15.9 & 0.711 \\
      & Overlap-based ParallelTokenizer-Origin & 505.0 & 0.887 & 26.5 & 0.803 & 64.1 & 0.944 \\
     & Overlap-based ParallelTokenizer-Ours & 218.5 & 0.982 & 7.8 & 0.998 & 18.9 & 0.998 \\
     & LoPT(Ours) & 108.7 & 1.0 & 6.4 & 1.0 & 16.6 & 1.0 \\
    \midrule
    \multirow{6}{*}{BERT-BASE-CASED} & 
    HuggingFace TokenizerFast & 509.2 & - & 23.6 & - & 63.5 & - \\
     & Delimiter based ParallelTokenizer (,) & 93.7 & 0.970 & 6.7 & 0.966 & 18.1 & 0.834 \\
     & Delimiter based ParallelTokenizer (.) & 91.4 & 0.956 & 7.0 & 0.953 & 14.0 & 0.868 \\
     & Delimiter based ParallelTokenizer (-) & 86.6 & 0.996 & 6.5 & 0.994 & 16.6 & 0.715 \\
     & Overlap-based ParallelTokenizer-Origin & 526.4 & 0.865 & 22.9 & 0.797 & 64.1 & 0.943 \\
     & Overlap-based ParallelTokenizer-Ours & 479.5 & 0.982 & 7.6 & 0.996 & 29.0 & 0.997  \\
     & LoPT(Ours) & 106.7 & 1.0 & 6.3 & 1.0 & 17.3 & 1.0 \\
    \midrule
    \multirow{6}{*}{BERT-BASE-UNCASED} & 
    HuggingFace TokenizerFast & 578.0 & - & 26.5 & - & 73.6 & - \\
     & Delimiter based ParallelTokenizer (,) & 94.9 & 0.970 & 6.7 & 0.966 & 19.3 & 0.834 \\
     & Delimiter based ParallelTokenizer (.) & 89.6 & 0.956 & 7.2 & 0.953 & 16.3 & 0.868 \\
     & Delimiter based ParallelTokenizer (-) & 83.8 & 0.996 & 6.6 & 0.994 & 17.9 & 0.715 \\
     & Overlap-based ParallelTokenizer-Origin & 571.5 & 0.867 & 31.3 & 0.793 & 75.2 & 0.945 \\
     & Overlap-based ParallelTokenizer-Ours & 431.7 & 0.980 & 7.5 & 0.996 & 26.1 & 0.997 \\
     & LoPT(Ours) & 107.1 & 1.0 & 6.6 & 1.0 & 18.5 & 1.0 \\
    \bottomrule
    \end{tabular}}
    \caption{The performance of different tokenization methods on three datasets. We use standard HuggingFace TokenizerFast from Qwen3, DeepSeek-V3, Llama-3.1, GPT-OSS-120B, BERT-Base-Cased, and BERT-Base-Uncased as the base tokenizers. The latency is the time (in milliseconds) required to convert a string to a dictionary consisting of a series of tensors. The accuracy is the proportion of results that exactly match those obtained by directly using the HuggingFace TokenizerFast. For the delimiter-based parallel tokenizer, ("-"), (","), (".") indicate that we use whitespace, comma, and period as the delimiters to split the long text.}
    \label{comparison_result}
\end{table}

\subsection{Analysis Experiment}
\subsubsection{Sequence Length's Impact}

\begin{figure*}
    \centering
    \includegraphics[width=0.9\linewidth]{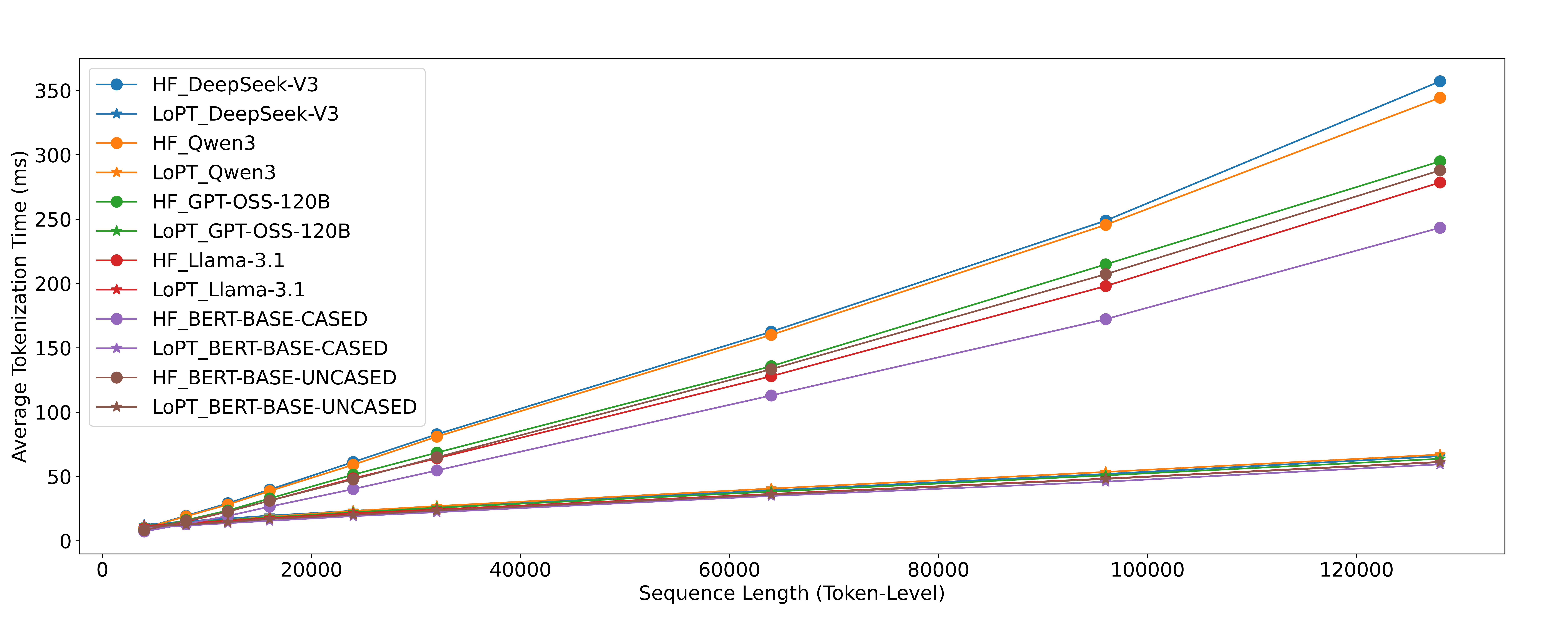}
    \caption{A performance comparison between the HuggingFace TokenizerFast ("·" points) and our framework LoPT ("*" points) on the LongBenchV2 dataset, with different sequence lengths. The horizontal axis represents the sequence length (number of tokens), and the vertical axis represents the tokenization time consumption (ms).}
    \label{seq_len}
\end{figure*}

To investigate the performance differences between our method and the standard tokenization approach under varying sequence lengths (number of tokens), we conducted experiments at sequence lengths of 4k, 8k, 24k, 32k, 40k, 48k, 64k, and 128k using the LongBenchV2 dataset. The performance of both the standard tokenizer and our framework is shown in \Cref{seq_len}.

According to the results, we can observe that the processing time of the HuggingFace TokenizerFast increases linearly with the sequence length. Although our framework also exhibits linear growth, its slope is significantly smaller than that of the HuggingFace TokenizerFast. This indicates that the advantage of our method becomes more pronounced as the sequence length increases.

\subsubsection{Chunk Size's Impact}

\begin{figure*}
    \centering
    \includegraphics[width=0.9\linewidth]{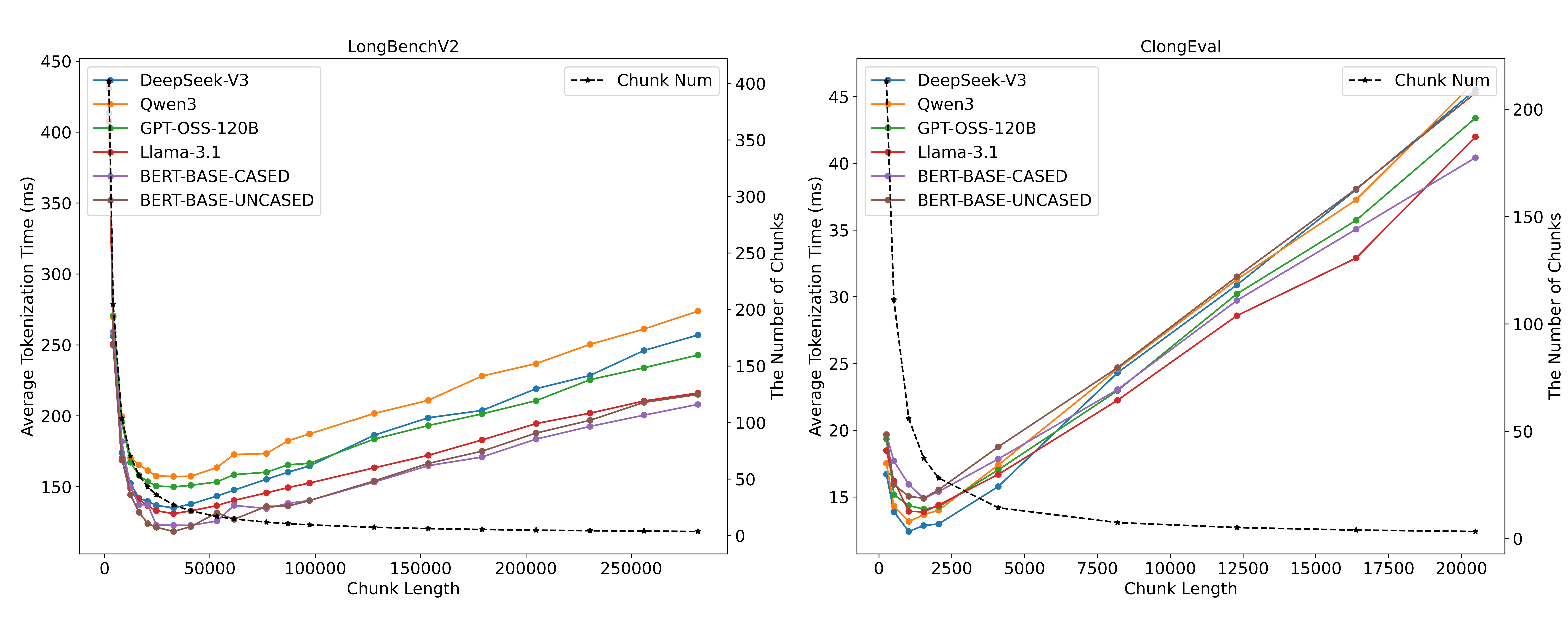}
    \caption{Performance of tokenization using our framework on LongBenchV2 and ClongEval datasets with different chunk lengths. The process pool size is 32. The horizontal axis represents the chunk length (character-level), and the vertical axis represents the tokenization time consumption. The dashed line represents the average number of chunks corresponding to different chunk lengths.}
    \label{chunk_size}
\end{figure*}

To investigate the performance of our framework with different chunk lengths (character-level), we conduct experiments to evaluate the performance of the LoPT across varying chunk lengths. The results are shown in \Cref{chunk_size}.

According to this figure, in the initial stage, due to the relatively small chunk lengths, the number of chunks generated is large, and the process pool is insufficient. As the chunk length increases, the number of chunks decreases. Although the processing time per process becomes longer, the waiting time for fragmented processes is shortened, leading to a reduction in overall time. However, as the chunk length continues to grow and the number of chunks further decreases, idle processes emerge in the process pool, and the processing time per process increases even more. During this stage, as the chunk size increases, the overall time required also increases.

Additionally, it can be observed that the type of tokenizer relatively less influences the chunk length corresponding to the minimum tokenization time. Theoretically, this turning point is more significantly affected by the input length and the pool size. Longer prompts and larger pool sizes lead to a larger chunk size at this inflection point. Overall, the chunk length at this point falls within a range that brings the number of chunks close to the size of the pool.

\subsubsection{Batch Size's Impact}
\begin{figure*}
    \centering
    \includegraphics[width=0.9\linewidth]{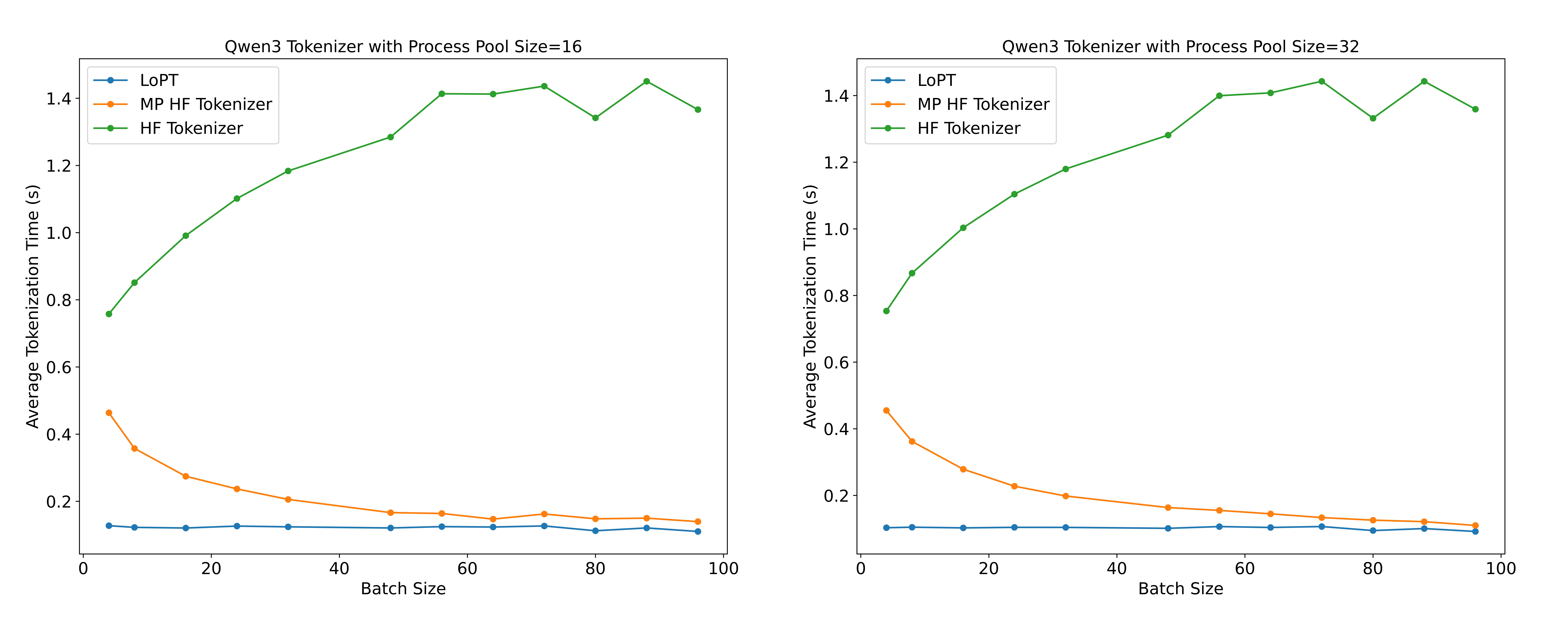}
    \caption{Performance of our framework and two baselines (we choose Qwen3 tokenizer as the base tokenizer) on the LongBenchV2 dataset with different batch sizes. The horizontal axis represents the batch size, and the vertical axis represents the time consumption for tokenization (s). HF Tokenizer and MP HF Tokenizer refer to the HuggingFace tokenizer and its sample-level, multiple-process version, respectively.}
    \label{batch_size}
\end{figure*}

\begin{figure}
    \centering
    \includegraphics[width=0.9\linewidth]{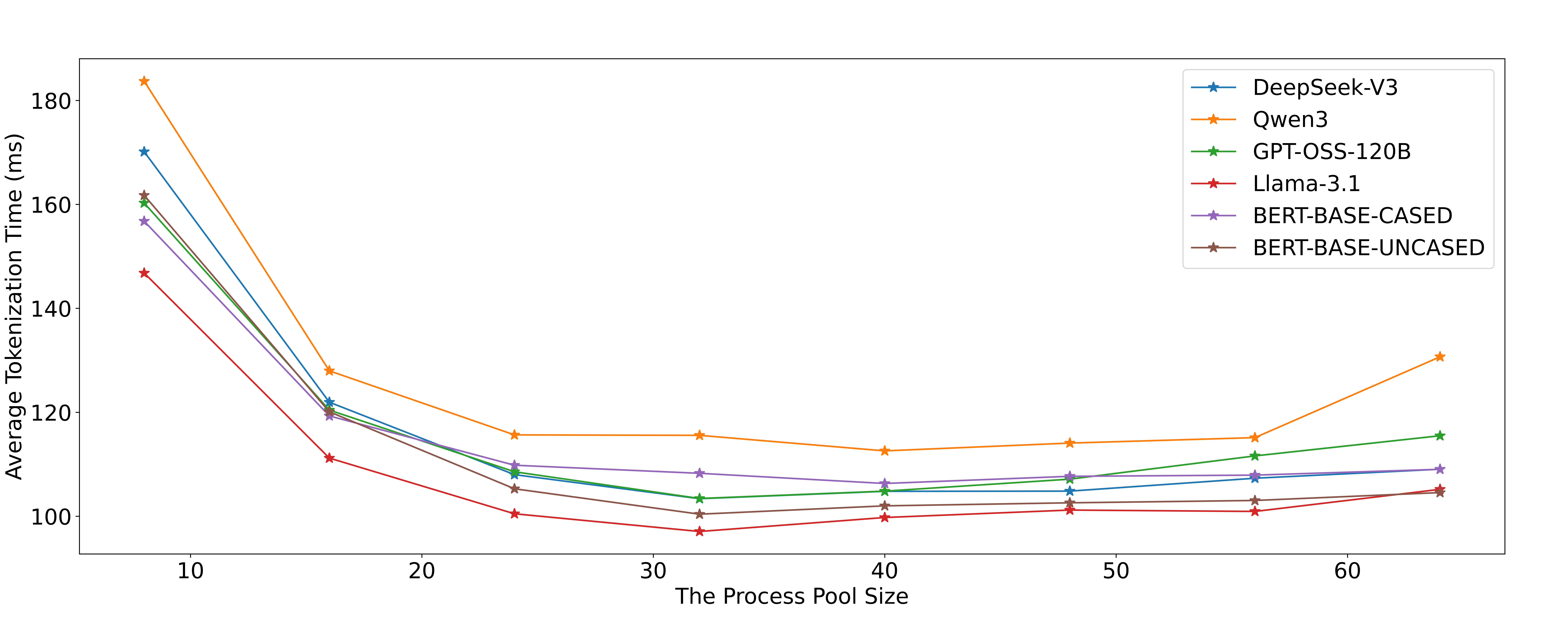}
    \caption{Performance of tokenization using our framework on the LongBenchV2 dataset with different process pool sizes. The horizontal axis represents the process pool size, and the vertical axis represents the time consumption for tokenization.}
    \label{process_num}
\end{figure}

Although the batch size in long-context inference scenarios is generally small, to provide a more comprehensive comparison between our framework and HuggingFace TokenizerFast, we conduct tokenization experiments using the Qwen3 tokenizer on the LongBenchV2 dataset. The performance of our framework and HuggingFace TokenizerFast (HF tokenizer) based on different batch sizes is obtained. Additionally, considering that the HuggingFace tokenizer can be accelerated with multi-processing at the sample level, we also implement a sample-level multi-processing version of the HuggingFace TokenizerFast (MP HF tokenizer) as a stronger baseline. The final results are shown in \Cref{batch_size}.

According to the experimental results, the average tokenization time of our framework across different batch size settings remains significantly lower than that of the HF tokenizer. When the batch size is small, the processing time of our framework is less than that of the MP HF tokenizer. As the batch size increases, the processing time of our framework begins to approach that of the MP HF tokenizer. The batch size corresponding to the intersection point between our framework and the MP HF tokenizer is influenced by the process pool size—the larger the pool size, the larger the batch size at which the intersection occurs.

\subsubsection{The Process Pool Size's Impact}

To investigate the impact of the number of processes on tokenization time, we conduct experiments using process pools of varying sizes on the LongBenchV2 dataset. The results are shown in the \Cref{process_num}. Overall, as the process pool increases, the tokenization time of our framework continuously decreases. However, a larger number of processes does not necessarily lead to better acceleration performance. Among the configurations with an increasing number of processes, there exists an optimal number of processes that minimizes the time required for tokenization.

\subsubsection{CPU Device's Impact}
\begin{figure}
    \centering
    \includegraphics[width=0.7\linewidth]{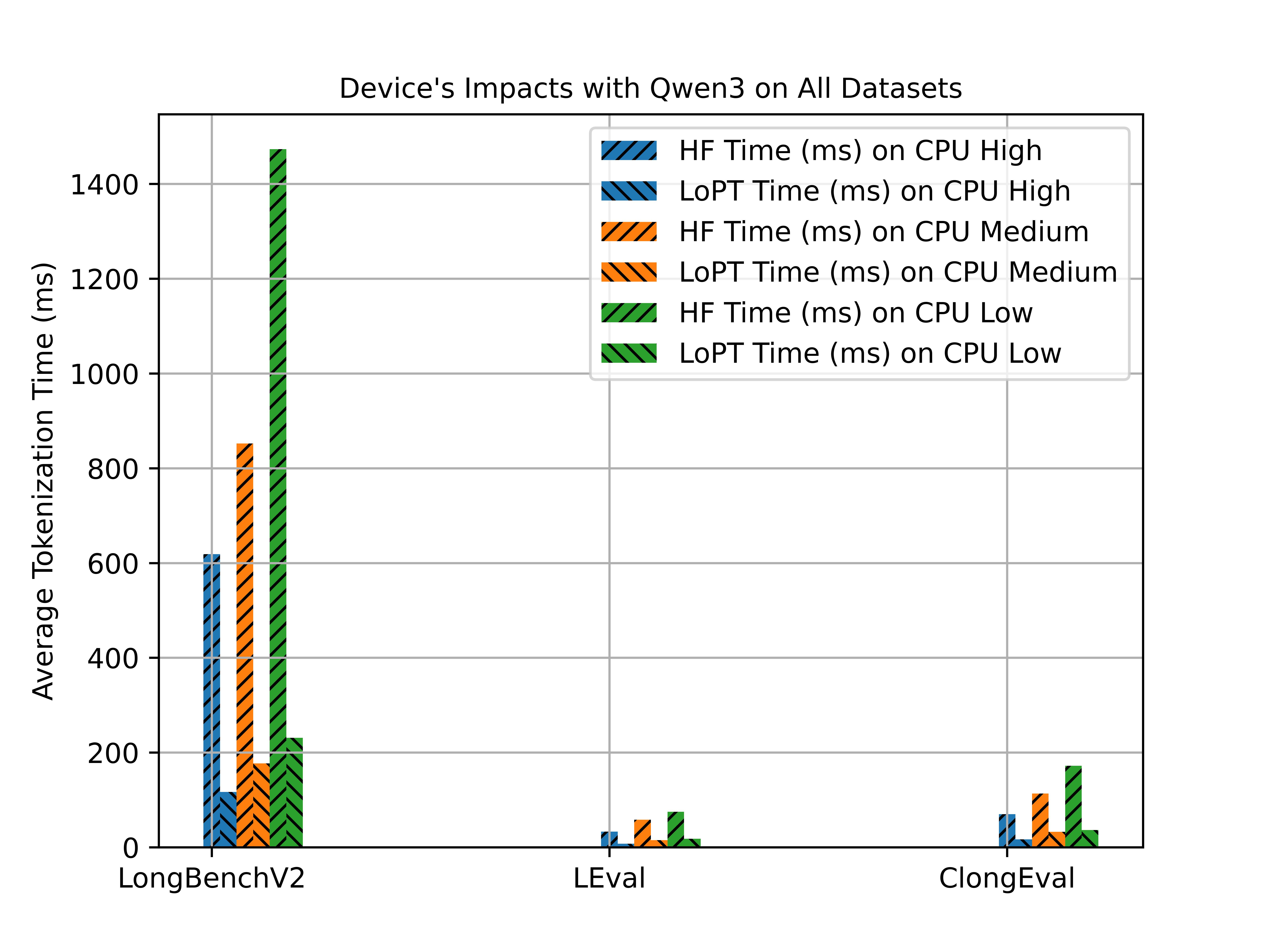}
    \caption{Performance of Qwen3 HuggingFace TokenizerFast and our framework working on devices with different computation capacity (CPU High $>$ CPU Medium $>$ CPU Low). The experiment is conducted on three datasets. The horizontal axis represents the dataset, and the vertical axis represents the time consumption for tokenization (ms). "/" represents the HuggingFace TokenizerFast, and "$\backslash$" represents our framework.}
    \label{cpu}
\end{figure}

Currently, tokenization computation is primarily performed on the CPU side. The time consumption of a tokenizer varies when working on CPUs with different computing capabilities. To investigate the performance differences of our framework on CPUs with varying computing capabilities, we use the Qwen3 tokenizer as the base tokenizer and conduct experiments on three CPU devices with different computing capabilities (CPU High $>$ CPU Medium $>$ CPU Low). The time performance of both HuggingFace and our framework was obtained. Specific results are shown in \Cref{cpu}.

We can observe that the time consumption of our framework is more consistent across different computational power levels compared to HuggingFace TokenizerFast, indicating that it is less affected by CPU capability and can achieve lower computational latency even on CPUs with lower performance. Moreover, the acceleration effect of our framework is more pronounced on CPUs with lower computational power. These results demonstrate that the performance of our framework is minimally influenced by CPU capability, exhibiting strong compatibility across various computing devices.

\section{Conclusion}

In this work, we identified tokenization as a critical yet under-optimized bottleneck in long-context LLM inference. While parallel tokenization offers a viable path to acceleration, its adoption has been hampered by output inconsistencies. To this end, we proposed LoPT, a novel lossless parallel tokenization framework that effectively resolves the inconsistency problem through character-position-based token merge and dynamic chunk length adjustment. Extensive experimental results validate that our framework achieves significant tokenization speedups across diverse datasets while producing outputs that are perfectly identical to those from standard tokenization. Furthermore, we provided a theoretical guarantee of losslessness and thorough analytical experiments to elucidate the framework's performance characteristics. Looking forward, LoPT can be seamlessly integrated into existing long-context inference systems to alleviate the tokenization bottleneck. We believe our work opens up a new direction for optimizing inference pipelines and contributes to the broader goal of efficient large-scale language model deployment. Future work will focus on further optimizing the chunking strategy and extending the framework to a wider range of tokenizers.

\bibliographystyle{ieeetr}
\bibliography{sample}

\section{Proof of \Cref{thm:lossess}}
\label{app:proof}

In this section, we prove \Cref{thm:lossess} for WordPiece-based tokenization and BPE-based tokenization. For simplicity, we denote 
$F(S)$ as the tokenization of sequence $S$, and $s(t)$ is the text span in $S$ corresponding to $t$. 

Without loss of generality, we only consider that $S$ is divided into two parts, then \Cref{thm:lossess} can be reformulated as the following lemma:

\begin{lemma}
$S = S_1 + S_o + S_2$, where $S_o$ is the overlapped part. Then 
$$
F(S_1 + S_o) = [t^{(l)}_1,t^{(l)}_2,...,t^{(l)}_m], \quad F(S_o+S_2) = [t^{(r)}_1,t^{(r)}_2,...,t^{(r)}_n].
$$
If $s(t^{(l)}_{i_1}) = s(t^{(r)}_{i_2}), s(t^{(l)}_{i_1+1}) = s(t^{(r)}_{i_2+1}),...,s(t^{(l)}_{i_1+k}) = s(t^{(r)}_{i_2+k})$, and $$\text{len}(s(t^{(l)}_{i_1})+s(t^{(l)}_{i_1+1}) + ... + s(t^{(l)}_{i_1+k})) > T,$$
where $T = \text{maxlen of words in the vocabulary}$, then we have
$$
F(S) = (t^{(l)}_1,t^{(l)}_2,...,t^{(l)}_{i_1}, t^{(l)}_{i_1+1},
...,t^{(l)}_{i_1+k},t^{(r)}_{i_2+k+1},...,t^{(r)}_{n}).
$$
\end{lemma}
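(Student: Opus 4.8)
The plan is to reduce the statement to one about individual pre-tokenization units (``words''), and then to show that the character-position matching forces $F(S)$, $F(S_1+S_o)$ and $F(S_o+S_2)$ to be assembled from identical word-level pieces around the overlap. Reading off \Cref{BPE Tokenization} and \Cref{WordPiece Tokenization}, both tokenizers first normalize and pre-tokenize the input into a word list $W=[w_1,\dots,w_p]$ and then encode each word in isolation, so $F(U)=g(w_1)g(w_2)\cdots g(w_p)$, where $g$ is the per-word encoder --- greedy longest-vocabulary-prefix for WordPiece, the iterated best-merge loop for BPE --- and $g(w)$ depends on $w$ alone. I will also use that pre-tokenization is \emph{local}: there is a small constant $c<T$ such that whether a given position begins a new word depends only on a radius-$c$ window of characters around it (immediate by inspection of the normalizers/pre-tokenizers these tokenizers use). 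Hence the word boundaries of $S$, of $S_1+S_o$, and of $S_o+S_2$ all coincide in the interior of the overlapped block $S_o$.

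Next I locate the matched run. Write $X=s(t^{(l)}_{i_1})+\cdots+s(t^{(l)}_{i_1+k})$ for the matched span read inside $S$, with left and right endpoints $e^-<e^+$. Two tokens can share a global character span only if that span lies inside the overlap, so $X\subseteq S_o$; and $e^-,e^+$ are token boundaries in both $F(S_1+S_o)$ and $F(S_o+S_2)$ (they are endpoints of the matched tokens). The hypothesis $\mathrm{len}(X)=e^+-e^->T$ provides the slack that drives everything: no vocabulary entry, hence no emitted token, has length exceeding $T$, so the cut position $e^-$ sits at least $T$ characters to the left of $e^+$ and deep inside $S_o$.

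For WordPiece the per-word encoder runs greedily from the left, and by the first paragraph the word structure of $S$ and of $S_1+S_o$ agrees up to $e^-$ while the two strings literally coincide up to $e^+>e^-+T$; a short lockstep induction on greedy steps then shows the encoders of $S$ and of $S_1+S_o$ emit identical tokens and remain at the same position for as long as that position is $\le e^-$. Since $e^-$ is a token boundary of $F(S_1+S_o)$ it is reached exactly, hence also by $F(S)$. From $e^-$ onward the encoder of $S$ sees exactly the suffix of $S$ starting at $e^-$, which is the tail of $S_o+S_2$ beginning at the same point, so $F(S)$ thereafter reproduces $F(S_o+S_2)$ from its own token boundary at $e^-$. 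Concatenating the two halves and rewriting through the matched equalities $t^{(l)}_{i_1+j}=t^{(r)}_{i_2+j}$ ($0\le j\le k$) yields exactly the asserted token list. The $O(c)$ correction to the length threshold is harmless, since $c\le 1$ for the tokenizers in question.

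For BPE the per-word encoder is global within a word, so the analogue of the previous step becomes a locality lemma: if $w'$ is a prefix of $w$, then $\mathrm{BPE}(w)$ and $\mathrm{BPE}(w')$ agree on every token whose span lies in $[0,\mathrm{len}(w')-T]$, and symmetrically for suffixes. The idea is that a merged token is never longer than $T$, so any merge that produces a token inside $[0,\mathrm{len}(w')-T]$ only involves characters common to $w$ and $w'$, and the best-merge order restricted to that common region is therefore identical in the two runs; a careful induction over the merge process makes this precise. Granting the lemma, $\mathrm{len}(X)>T$ keeps the run --- and everything before it --- clear of the ``unsafe'' $T$-window at the truncation seam of whichever chunk clips the word straddling $e^-$ or $e^+$, and one concludes as in the WordPiece case. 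I expect this BPE locality lemma to be the main obstacle: bounding how far a truncation-induced merge cascade can propagate and confirming that $T$ (again up to the $O(c)$ slack) is the correct cutoff requires genuinely careful bookkeeping of the greedy merge order. The remaining boundary cases --- a word straddling the $S_1/S_o$ or $S_o/S_2$ seam --- are absorbed by the same window argument, using once more that $X$ is long enough to keep the cut comfortably interior to $S_o$.
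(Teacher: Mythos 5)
Your WordPiece argument is sound and takes a genuinely different route from the paper: you run the greedy encoder on $S$ and on $S_1+S_o$ in lockstep up to the matched boundary $e^-$, using the fact that no emitted token exceeds length $T$ so that nothing emitted at a position $\le e^-$ can see past $e^+$, and then splice onto $F(S_o+S_2)$. The paper instead argues by infinite descent on a minimal counterexample, case-analyzing where the span of the \emph{first emitted token} of $S$ falls relative to $L$, $M$, $R$ (its Cases 1--4), with the hypothesis $\mathrm{len}(M)>T$ used only to exclude a single token swallowing all of $M$. Your version is more constructive and makes the role of $T$ clearer; the paper's version avoids having to reason about how far a perturbation propagates, because the descent recurses on a strictly shorter string after peeling off the first token. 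Do note one loose end shared by neither write-up being fully explicit: when $e^-$ sits near the start of $S_o$, the word containing $e^-$ may begin inside $S_1$ in $S$ but at position $0$ in $S_o+S_2$, so the ``mid-word'' state (continuation subword vs.\ word-initial subword) of the two encoders at $e^-$ need not coincide; your appeal to ``$X$ is long enough'' does not control the distance from $e^-$ to the \emph{left} end of $S_o$.

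The genuine gap is the BPE case. Your plan rests on the locality lemma that truncating a word perturbs $\mathrm{BPE}(w)$ only within the last $T$ characters of the common prefix, and you correctly flag it as the main obstacle --- but it is not merely unproven, it is false as stated. Take merges $cd$ (priority highest), $bc$ (next), $ab$ (lowest), so $T=2$: then $\mathrm{BPE}(abc)=a\,{+}\,bc$ while $\mathrm{BPE}(abcd)=ab\,{+}\,cd$, so deleting the final character changes the token at position $0$, more than $T$ characters away; longer priority-inversion chains push the disturbance arbitrarily far for any fixed cutoff. So ``bound the cascade by $T$'' cannot be the mechanism, and your reduction does not close the BPE half of the lemma. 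The paper sidesteps a quantitative propagation bound by rerunning the descent on the number of unmerged fragments and case-analyzing the \emph{first merge operation} rather than the final tokens; if you want to keep your simulation-style argument for BPE, you would need to replace the per-position locality claim with an invariant on the evolving fragment sequence (e.g., that the two runs' merge histories agree on every fragment wholly contained in the matched region at the time it is created), which is essentially the paper's inductive object in disguise.
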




\subsection{Provement for WordPiece-based Tokenization}

\begin{lemma} \label{lem:lem1}
If the first token generated during the tokenization of $S$ is t (where the text span in $S$ corresponding to t is $s(t)$) and there exists a substring $S'$ of $S$ satisfying $S'\subseteq S$ and $s(t) \subseteq S'$ then the first token generated during the tokenization of $S'$ is also $t$.
\end{lemma}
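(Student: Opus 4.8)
The plan is to reduce the claim to the greedy longest-prefix rule of WordPiece (\Cref{WordPiece Tokenization}), after first observing that the hypotheses actually force $S'$ to be a \emph{prefix} of $S$. Since $t$ is the \emph{first} token produced for $S$, its span $s(t)$ occupies the initial characters of $S$; reading $A \subseteq B$ with position information (as the paper does — $s(t)$ is "the text span in $S$ corresponding to $t$", a \emph{positioned} object), the conditions $s(t) \subseteq S' \subseteq S$ say that the occurrence of $S'$ inside $S$ contains the occurrence of $s(t)$ at the start of $S$. Hence $S'$ must begin where $S$ begins, i.e. $S' = S[\,0:b\,]$ for some $b \ge \mathrm{len}(s(t))$.

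Next I would invoke the algorithm: after normalization and pre-tokenization, WordPiece's first token is $\texttt{MatchLongestSubwordFromStart}(W[0], V)$, so $s(t)$ is the longest prefix of the first word $W[0]$ that lies in $V$. Because pre-tokenization splits only on word boundaries and a single WordPiece token never crosses a word boundary, $s(t)$ lies entirely inside $W[0]$; combined with $b \ge \mathrm{len}(s(t))$ and $S'$ being a prefix of $S$, the first word $W'[0]$ of $S'$ is itself a prefix of $W[0]$ that still contains $s(t)$ as a prefix.

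The core step is then immediate. $s(t)$ is a prefix of $W'[0]$ and $s(t) \in V$, so it is a valid candidate for $\texttt{MatchLongestSubwordFromStart}(W'[0], V)$. Any strictly longer prefix $u$ of $W'[0]$ is also a prefix of $W[0]$ (since $W'[0]$ is a prefix of $W[0]$); if such a $u$ were in $V$ it would contradict the maximality of $s(t)$ among the vocabulary-prefixes of $W[0]$. Therefore $s(t)$ is the longest vocabulary-prefix of $W'[0]$ as well, and the first token produced for $S'$ is exactly $t$.

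The main (and essentially only) obstacle is the bookkeeping that makes the informal string-containment notation rigorous and reconciles it with normalization and pre-tokenization: one must argue that normalizing a prefix of $S$ yields a prefix of the normalization of $S$, and that pre-tokenization boundaries are stable when one extends a prefix past $s(t)$. Both hold whenever normalization and pre-tokenization act character-locally, which is the standard situation; I would state this locality explicitly as a standing assumption if the paper has not already done so, after which the greedy-maximality argument above carries through without further calculation. The same scheme — match-the-first-token, then peel it off and recurse on the shortened strings — is what I expect to drive the proof of the full lemma.
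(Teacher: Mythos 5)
Your proof is correct, but it is worth knowing that the paper does not actually prove this lemma at all: its entire ``proof'' is the sentence ``The validity of this lemma is self-evident.'' So your argument is not a variant of the paper's --- it is the missing argument. What you supply is exactly the right content: (i) the observation that, because greedy WordPiece places the first token at the head of the (pre-tokenized) input, the hypotheses $s(t)\subseteq S'\subseteq S$ force the occurrence of $S'$ to contain the start of $S$, so $S'$ is effectively a prefix; (ii) the greedy-maximality step, that any vocabulary prefix of $W'[0]$ longer than $s(t)$ would also be a vocabulary prefix of $W[0]$ and contradict the choice of $t$; and (iii) the honest identification of the hidden standing assumption that normalization and pre-tokenization are character-local, so that the first word of a prefix of $S$ is a prefix of the first word of $S$. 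Point (iii) is a real gap in the paper that your write-up makes explicit rather than papering over. One caveat to flag: the paper later applies this lemma's reasoning (e.g.\ in Case~2 of the main proof, where $s(t)\subseteq M$ with $M$ a \emph{middle} substring of $S=L+M+R$) in a setting where your ``$S'$ must be a prefix'' reduction does not apply; under the literal greedy-WordPiece reading that case is vacuous unless $L$ is empty, which is a tension in the paper's own usage rather than a defect of your proof of the statement as written. For the BPE reading, where ``first token generated'' means the highest-priority merge and can occur anywhere, a genuinely different argument would be needed; your proof, like the lemma's placement in the paper, covers the WordPiece case.
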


The validity of this lemma is self-evident.

\begin{lemma} \label{lem:lem2} If
    $F(S) = (t_1, t_2, ...,t_n)$, 
    then for $1\leq l < r \leq n$,  
    $$F([s_l, ...,s_r]) = (t_l, ..., t_r).$$
\end{lemma}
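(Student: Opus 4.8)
The plan is to prove the statement (for WordPiece) by removing tokens from the two ends separately and then combining. Write $S^{\ast} = [s_l,\dots,s_r]$ for the contiguous substring of $S$ that exactly covers tokens $t_l$ through $t_r$; the goal is $F(S^{\ast}) = (t_l,\dots,t_r)$. I will establish two one-sided statements: a \emph{right-truncation} lemma (cutting at a token boundary on the right preserves the left tokens) and a \emph{left-truncation} lemma (cutting at a token boundary on the left preserves the right tokens), and then obtain $S^{\ast}$ by applying right-truncation to the string produced by left-truncation.

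I would prove right-truncation first, since it is the clean direction and holds for an arbitrary string: if $F(X) = (\tau_1,\dots,\tau_p)$ and $q \le p$, then $F$ of the prefix of $X$ covering $\tau_1,\dots,\tau_q$ equals $(\tau_1,\dots,\tau_q)$. Because WordPiece pre-tokenizes into words and then applies greedy longest-match left to right, every whole word lying before the cut is tokenized identically, and only the word containing the cut needs argument. For that word, truncating it at a subword boundary yields a prefix-word; its longest vocabulary prefix cannot exceed that of the full word (a prefix of the prefix-word is a prefix of the full word), yet the recorded first subword is itself such a prefix, so the first match is reproduced, and inducting on subwords reproduces exactly the retained ones. \Cref{lem:lem1} is the word-start instance of this argument.

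For left-truncation I would use word-by-word independence directly: when $s(t_l)$ begins at a word boundary, $S^{\ast}$ splits into the same word sequence as the matching stretch of $S$ (with the last word possibly truncated), every word is tokenized independently and identically, and \Cref{lem:lem1} pins the first emitted token to $t_l$; an induction on $r-l$ closes it. Composing the two truncations then gives $F(S^{\ast}) = (t_l,\dots,t_r)$, which is the claim.

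The main obstacle is the left end when $s(t_l)$ starts in the \emph{middle} of a word, i.e. when $t_l$ is a continuation subword. WordPiece matches the first subword of a word as a word-start but later subwords carry the continuation marker (``\#\#''), so re-tokenizing a substring that opens mid-word would treat its first piece as a word-start and may return a token id different from $t_l$ (and even change how many characters it greedily absorbs). To close this, I would either strengthen the hypothesis so that $t_l$ is required to be a word-start token — which is all that the later application of \Cref{lem:lem2} needs — or argue that the boundaries produced by the match step necessarily fall on word boundaries, since the pre-tokenization delimiters are identical across overlapping chunks and the character-position alignment forces the matched span boundaries to coincide with those delimiters. With this boundary condition secured, the greedy-stability argument used for right-truncation transfers to the left end and the induction completes.
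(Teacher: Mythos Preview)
Your approach is sound and considerably more careful than the paper's own argument. The paper's proof of this lemma is a two-sentence sketch: it examines the tokenization of $S$ ``token by token'' and asserts that any token lying between $t_l$ and $t_r$ ``will also appear'' in $F([s_l,\dots,s_r])$, while tokens outside ``do not affect the result.'' No explicit mechanism---greedy stability, word-level independence, or anything else---is invoked; it is close to restating the conclusion.

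Your decomposition into right- and left-truncation, with an explicit longest-prefix stability argument for the word containing the cut, is a different and more rigorous route. The right-truncation step is correct exactly as you describe: any vocabulary prefix of the truncated word is a prefix of the full word, and the recorded first subword is itself a prefix of the truncated word, so the match is reproduced and induction finishes. More importantly, you have correctly flagged an obstacle the paper does not confront: if $t_l$ is a continuation subword, re-tokenizing from $s(t_l)$ matches a word-start form rather than the ``\#\#'' form, so the lemma as literally stated can fail for WordPiece. The paper's brief proof ignores this, and its subsequent use of the lemma (splitting $S$ around an arbitrary ``first token $t$'') inherits the same issue. Your proposed repair---restrict $t_l$ to word-start positions, which is all the downstream argument actually needs---is the right fix; the alternative of arguing that matched overlap boundaries necessarily land on pre-tokenization delimiters is also viable, but belongs in the proof of the main theorem rather than in this lemma.
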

\begin{proof}
Considering the tokenization process of the original string $S$, we examine it token by token. If a token lies between $t_l$ and $t_r$, it will also appear in $F([t_l, ..., t_r])$. If a token does not lie between $t_l$ and $t_r$, it does not affect the result. This completes the proof of the lemma.
\end{proof}

Based on \Cref{lem:lem2}, if the first token generated during the tokenization of $S$, the sub-string of $S$ at left $t$ is $S_l$, the sub-string of $S$ at right $t$ is $S_r$, which means that $S = S_l + s_t +S_r$, then 

\begin{equation}
    F(S) = F(S_l) + F(s_t) + F(S_r)
\end{equation}

Returning to the original problem, we denote the split into left and right segments. After independent tokenization, the sub-string corresponding to overlapping tokens, i.e. $t^{(l)}_{i_1}, t^{(l)}_{i_1+1},
...,t^{(l)}_{i_1+k}$, are denoted as $M$. It is not difficult to prove that $M$ is a continuous string. Denoting the string to the left of $M$ as $L$ and the string to the right of $M$ as $R$, we have $S = L + M + R$. Further, we have the following result:

If the length of $M$ is greater than or equal to the maximum length of $W_i$, then the final result is correct, i.e.,
\begin{equation}
F(S) = F(L) + F(M) + F(R)
\end{equation}

\begin{proof}
    
We prove this by the method of infinite descent. If the theorem does not hold, there exists a counterexample with the smallest $len(S)$.
Based on the given conditions and Lemma 2, we have

$$F(L + M) = F(L) + F(M),$$
$$F(M + R) = F(M) + F(R)$$

Note that $S = L + M + R$. Consider the first token $t$ generated during the tokenization of $S$. We proceed with a case-by-case analysis:

\paragraph{Case 1 $s(t) \subseteq L$ or $s(t) \subseteq R$:}
We may assume that $s(t) \subseteq L$. Denote $L = Ll + t + Lr$. Then $t$ is also the first token of $L$ during the tokenization process. Thus,
$$F(L) = F(Ll + s(t) + Lr) = F(Ll) + t + F(Lr),$$
$$ F(S) = F(Ll) + t + F(Lr + M + R),$$
$$ F(S) \neq F(L) + F(M) + F(R). $$
Thus we have 
$$ F(Ll) + t + F(Lr+M+R) \neq F(Ll) + t + F(Lr) + F(M) + F(R), $$
$$ F(Lr+M+R) \neq F(Lr) + F(M) + F(R). $$
Considering the combination $Lr + M + R$, it forms a new and shorter counterexample, leading to a contradiction.

\paragraph{Case 2 $s(t) \subseteq M$:}
In this case, $t$ is the first token in $M$. Thus, 
$$M = Ml + s(t) + Mr,$$
$$F(S) = F(L+Ml) + t + F(Mr+R),$$
$$F(L+M) = F(L+Ml) + t + F(Mr),$$
$$F(M+R) = F(Ml) + t + F(Mr+R),$$
$$F(M) = F(Ml) + t + F(Mr).$$
On the other hand, $$F(L+M) = F(L)+F(M) = F(L) + F(Ml) + t + F(Mr),$$
$$F(M + R) = F(M)+F(R) = F(Ml) + t + F(Mr) + F(R),$$
thus 
$$ F(L + Ml) = F(L) + F(Ml), \quad F(Mr + R) = F(Mr) + F(R).
$$
Finally, we have
$$F(S) = F(L) + F(M) +F(R).$$
\paragraph{Case 3 $s(t) \subseteq L+M$ or $s(t) \subseteq M+R$, but case 1 and 2 are not satisfied:}
Without loss of generality, assume $s(t)\subseteq L+M$. Since $t$ is the first token generated in $S$, it is also the first token generated in $L+M$. However, $s(t)$ does not belong to $L$ or $M$, meaning $t$ does not appear in $F(L)$ nor in $F(M)$, yet $t$ is present in $F(L+M)$. This contradicts

$$F(L+M)=F(L)+F(M)$$
\end{proof}

\paragraph{Case 4 $s(t)$ does not belong to $L+M$ nor to $M+R$:}
The starting point of $t$ is before the starting point of $M$, and the ending point of $t$ is after the ending point of $M$. Thus, $len(t) > len(M)$, which contradicts $len(M) \geq \max\ len$.

\subsection{Proof for BPE-based Tokenization}

Assume there exists a counterexample with the minimal number of fragments, where fragments refer to the unmerged parts. Then, by considering the first merge operation, the remaining steps of the proof are consistent with the proof for WordPiece.
\end{document}